\def\ninept{\def\baselinestretch{1}}
\newcommand{\be}{\begin{equation}}
\newcommand{\ee}{\end{equation}}
\newcommand{\bea}{\begin{eqnarray}}
\newcommand{\eea}{\end{eqnarray}}
\newcommand{\MB}{\left[\begin{array}}
\newcommand{\ME}{\end{array}\right]}
\newcommand{\ei}{\end{itemize}}
\newcommand{\bi}{\begin{itemize}}
\newcommand{\X}{\mathcal{X}}
\newcommand{\Z}{\mathcal{Z}}
\newcommand{\A}{\mathcal{A}}
\newcommand{\B}{\mathcal{B}}
\newcommand{\C}{\mathcal{C}}
\newcommand{\D}{\mathcal{D}}
\DeclareMathOperator*{\argmin}{arg\,min}
\newtheorem{theorem}{Theorem}
\newtheorem{lemma}[]{Lemma}
\newtheorem{definition}[]{Definition}
\newtheorem{assumption}[]{Assumption}
\newtheorem{proposition}[]{Proposition}
\begin{document}

\title{Sequential Linearithmic Time Optimal Unimodal Fitting When Minimizing Univariate Linear Losses} 
\author{
	\IEEEauthorblockN{Kaan Gokcesu}, \IEEEauthorblockN{Hakan Gokcesu}
}
\maketitle

\begin{abstract}
	This paper focuses on optimal unimodal transformation of the score outputs of a univariate learning model under linear loss functions. We demonstrate that the optimal mapping between score values and the target region is a rectangular function. To produce this optimal rectangular fit for the observed samples, we propose a sequential approach that can its estimation with each incoming new sample. Our approach has logarithmic time complexity per iteration and is optimally efficient.
\end{abstract}

\section{Introduction}

In detection, estimation, prediction and learning problems \cite{poor_book, cesa_book}; intelligent agents often make decisions while dealing with considerable uncertainty such as randomness, noise and incomplete data. In such cases, the agents combine various features to determine the actions that maximize some utility \cite{russel2010}. These types of problems are prevalent in several fields, including decision theory \cite{tnnls4}, control theory \cite{tnnls3}, game theory \cite{tnnls1, chang}, optimization \cite{gokcesu2022low,zinkevich, hazan}, distribution estimation \cite{gokcesu2018density, willems, gokcesu2018anomaly, coding2}, anomaly detection \cite{gokcesu2019outlier}, signal processing \cite{ozkan}, prediction \cite{singer,gokcesu2020recursive} and bandits \cite{cesa-bianchi}.
The output of these learning models aims to discriminate the data patterns and provide accurate estimates for practical usefulness. While most learning methods produce output scores, many applications require accurate target estimates. Therefore, it has become crucial to develop methods for post-processing the output of learning models to generate accurate estimates. 

One example is the calibration of classifiers \cite{naeini2015obtaining}. Instead of a produced score, the end application requires a probability estimate. Producing well-calibrated probabilities is critical in many fields, including science (e.g., determining which experiment to conduct), medicine (e.g., choosing which therapy to use) and business (e.g., deciding which investment to make). In learning problems, obtaining well-calibrated classifiers is essential not only for decision-making but also for combining \cite{bella2013effect} or comparing \cite{zhang2004naive,jiang2005learning, hashemi2010application} different classifiers. In classification problems probability estimates play a vital role \cite{zadrozny2001obtaining}. There are two approaches to obtaining well-calibrated classification models. The first approach is to build an intrinsically well-calibrated model by modifying the objective function, which can potentially increase the computational cost \cite{naeini2015obtaining}. On the other hand, the second approach involves post-processing the outputs of discriminative classification models to achieve calibration, which is flexible and general \cite{naeini2015obtaining}. However, it can potentially decrease discrimination while increasing calibration if not done carefully.
Calibration methods have two main applications: they can be used to convert the outputs of discriminative models that have no apparent probabilistic interpretation to posterior class probabilities, or to improve the calibration of a mis-calibrated model \cite{platt1999probabilistic, niculescu2005predicting}. 

In literature, there are different techniques for mapping the outputs of learning models to accurate estimates.
One way is by imposing a parametric form such as Platt's method \cite{platt1999probabilistic}, which uses a sigmoid function to map the score outputs into probabilities and then maximizes the likelihood of the parameters using a model-trust minimization algorithm \cite{gill2019practical}. The method was initially developed for SVM models, but it has also been applied to other classifiers, including Naive Bayes \cite{niculescu2005predicting}. The sigmoid function, however, may not be the best fit for Naive Bayes scores for some datasets \cite{zadrozny2002transforming,bennett2000assessing}. Although Platt's method is computationally efficient and prevents over-fitting, it is restrictive since the scores outputs are directly used regardless of their noise or errors \cite{jiang2012calibrating}.

In recent years, less restrictive non-parametric learning methods have gained popularity to address the issues of parametric learning approaches \cite{gokcesu2021nonparametric}. One such method is equal frequency histogram binning, also known as quantile binning or just binning \cite{zadrozny2001learning, zadrozny2001obtaining}. In this method, score outputs are sorted and partitioned into bins and each bin is optimized individually \cite{zadrozny2002transforming}. For small or unbalanced datasets, arbitrary bin numbers and boundaries can lead to inaccurate estimates. To address these, several extensions and refinements have been proposed, including ACP \cite{jiang2012calibrating}, which derives a confidence interval around each prediction to build the bins; and BBQ \cite{naeini2015obtaining}, which considers multiple binning models and their combination with a Bayesian scoring function \cite{heckerman1995learning}. ABB \cite{naeini2015binary} improves on these methods by considering Bayesian averaging over all binning models. However, none of these approaches utilizes the discrimination power of the input estimator.

The isotonic regression has become one of the most commonly used non-parametric fitting techniques in machine learning \cite{robertson1988order,gokcesu2021optimally,ayer1955empirical, brunk1972statistical,tibshirani2011nearly,gokcesu2021efficient,naeini2016binary,menon2012predicting,zhong2013accurate,gokcesu2022log}. It is an intermediary approach between parametric fitting and binning, which models the mapping based on the ranking of the score outputs \cite{zadrozny2002transforming}. Isotonic regression is a non-parametric regression that uses algorithms like pair-adjacent violators (PAV) \cite{ayer1955empirical, brunk1972statistical}. Various approaches that address the issues of binning have incorporated isotonic regression such as ENIR \cite{naeini2016binary} that utilizes modified pool adjacent violators algorithm (mPAVA) to find the solution path to a near isotonic regression problem in linearithmic time \cite{tibshirani2011nearly} and combines the predictions made by these models. Another variation of the isotonic-regression-based method is for predicting accurate probabilities with a ranking loss \cite{menon2012predicting}. Also, another extension combines the outputs from multiple binary classifiers to obtain calibrated probabilities \cite{zhong2013accurate}.
 
In this work, we show that the best unimodal isotonic fit for linear losses is a rectangular mapping. We provide a linearithmic time algorithm that finds the optimal mapping in a follow the leader manner. 
	
\section{Preliminaries}
\subsection{Problem Definition}\label{sec:problem}
We start by formally defining the problem setting of creating the optimal unimodal fit under linear losses. Let us have $N$ number of samples with their respective indices $n\in\{1,2,\ldots,N\}$. 

First, we have the input scores $\{x_n\}_{n=1}^N$ such that
\begin{align}
	x_n\in\overline\Re,&&\forall n.
\end{align}

Secondly, we have their respective linear losses $\{z_n\}_{n=1}^N$ such that
\begin{align}
	z_n\in[Z_l,Z_u],&&\forall n,
\end{align}
for some $Z_l,Z_u\in\Re$ and
\begin{align}
	Z_l\leq 0\leq Z_u.
\end{align}

Our goal is to optimally map the scores $\{x_n\}_{n=1}^N$ to $\{y_n\}_{n=1}^N$ such that
\begin{align}
	C(x_n)=y_n, &&\forall n,
\end{align}
where the mapping $C(\cdot)$ is unimodal, i.e., 
\begin{enumerate}

	\item We map these score values to values
	\begin{align}
	q_n\triangleq C(x_n)\in[Q_0,Q_1].\label{eq:q}
	\end{align}
	\item The mapping $C(\cdot)$ is unimodal, i.e.,
	\begin{align}
	q_n\geq& q_{n'} \text{ if } x_{n^*}\geq x_n\geq x_{n'},
	\\q_n\leq& q_{n'} \text{ if } x_{n^*}\leq x_n\leq x_{n'},\label{eq:CL}
	\end{align}
	for some $x_{n^*}$ 
\end{enumerate}
For this new setting, we have the following problem definition.
\begin{definition}\label{def:problem}
	For $\{x_n\}_{n=1}^N$, $\{z_n\}_{n=1}^N$, the minimization of the linear loss is given by
	\begin{align*}
	\argmin_{C(\cdot)\in\Omega_u}\sum_{n=1}^{N}z_nC(x_n),\label{eq:problemL}
	\end{align*}
	where $\Omega_u$ is the class of all univariate unimodal functions that map to the interval $[Q_0,Q_1]$.
\end{definition}
We point out that the problem in \autoref*{def:problem} fully generalizes many classification problems. Specifically,
\begin{itemize}
	\item For binary classification, we have
	\begin{align*}
		[Q_0,Q_1]=[0,1]
	\end{align*}
	\item If the classification is unweighted, we have
	\begin{align*}
		z_n=1-2y_n, &&[Z_0,Z_1]=[-1,1]
	\end{align*} 
	\item If there is a class weight, we have
	\begin{align*}
		z_n=1-(\alpha+1)y_n, &&[Z_0,Z_1]=[-\alpha,1]
	\end{align*}
	\item If there are sample weights, we have
	\begin{align*}
		z_n=\beta_n-\beta_n(\alpha+1)y_n, &&[Z_0,Z_1]=[-\alpha\beta,\beta],
	\end{align*}
	where $\beta\geq\beta_n$, $\forall n$.
\end{itemize}

\begin{assumption}\label{ass:dummyL}
	We have the following two sample pairs
	\begin{align*}
		z_1=Z_u,\text{     } x_1=-\infty && z_N=Z_u, \text{     }x_N=-\infty,
	\end{align*}
	to enforce the unimodality. Otherwise, we can arbitrarily add these dummy samples, which does not change the result of the original problem.
\end{assumption}
\subsection{Optimality of Thresholding for Linear Losses}\label{sec:opt}

Here, we show why the optimal unimodal transform $C(\cdot)$ on the scores $x_n$ is a rectangular function for the general linear losses.
Let us assume that there exists an optimal monotone transform $C^*(\cdot)$ in $\Omega_u$ that minimizes \autoref*{def:problem}, where $\Omega_u$ is the class of all unimodal functions that map to the interval $[Q_0,Q_1]$.
\begin{proposition}
	If $z_n\geq0$ for all $n$. The optimal transform will be $q_n=Q_0$ for all $n$.
	If not all $z_n$ are nonnegative; there exists some $n_*$ such that $z_{n^*}<0$ and the optimal transform will have $q_{n^*}=Q_1$, which is the peak of the unimodal transform.
\end{proposition}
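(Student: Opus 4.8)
The plan is to exploit that the objective $\sum_{n=1}^N z_n q_n$ is affine in the vector of fitted values $(q_1,\dots,q_N)$, so that a minimizer over the bounded feasible set can always be driven to the extreme levels $Q_0$ and $Q_1$ by a perturbation (exchange) argument. I would split along the hypothesis exactly as the statement does. For the first part, assume $z_n\ge 0$ for every $n$. Since feasibility forces $q_n\ge Q_0$, each summand satisfies $z_n q_n\ge z_n Q_0$, hence $\sum_n z_n q_n\ge Q_0\sum_n z_n$ for every unimodal $C$. The constant map $q_n=Q_0$ is unimodal and meets this bound, so it is optimal, which is the claimed conclusion.

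For the second part, pick any index with $z_{n_*}<0$; such an index exists by hypothesis. First I would rule out the trivial optimum: the spike that is $Q_1$ at $n_*$ and $Q_0$ elsewhere is unimodal and has cost $Q_0\sum_n z_n+(Q_1-Q_0)z_{n_*}<Q_0\sum_n z_n$, so no optimal $C$ can be the constant $Q_0$; in particular the optimal peak value $q^\star=\max_n q_n$ satisfies $q^\star>Q_0$. The core step is a two-sided perturbation of the peak plateau $S=\{n:q_n=q^\star\}$, which is an interval in the $x$-ordering by unimodality. Moving the whole plateau to a common level $t$ keeps $C$ unimodal for $t$ ranging from the second-highest level up to $Q_1$, and changes the cost by $(t-q^\star)\sum_{n\in S}z_n$. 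Optimality forbids a decrease on either side: pushing $t$ up shows $\sum_{n\in S}z_n\ge 0$ whenever $q^\star<Q_1$, while pushing $t$ down (legal since $q^\star>Q_0$) shows $\sum_{n\in S}z_n\le 0$. Hence either $q^\star=Q_1$ already, or $\sum_{n\in S}z_n=0$ and the plateau can be raised to $Q_1$ at no cost; either way there is an optimum with $q^\star=Q_1$ and $\sum_{n\in S}z_n\le 0$, so the peak sits at $Q_1$.

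The delicate point, which I expect to be the main obstacle, is upgrading $\sum_{n\in S}z_n\le 0$ to the presence of a strictly negative sample at the peak. If $\sum_{n\in S}z_n<0$ the plateau already contains one and we are finished; the obstruction is the degenerate tie where the top plateau is cost-neutral with every $z_n=0$. Here I would argue by descent through the levels. Lowering a cost-neutral top plateau to the next-highest level is free, and re-running the plateau perturbation shows the enlarged top plateau is again cost-neutral; iterating, every successive top plateau has zero loss sum. This descent must eventually engulf the index $n_*$, for otherwise all plateaus above $Q_0$ would be all-zero and collapsing them to $Q_0$ would exhibit the constant map $Q_0$ as optimal, contradicting the strict spike bound. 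At the stage where $n_*$ enters the top plateau, that plateau is cost-neutral yet contains a sample with $z_{n_*}<0$; raising it back to $Q_1$ is free and yields an optimum with a strictly negative sample at the peak value $Q_1$, as claimed.
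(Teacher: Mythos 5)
Your proof is correct, and it is worth noting what it is being compared against: the paper states this proposition with \emph{no proof at all}, and the neighboring results are justified only by one-line exchange remarks (Lemma~\ref{lem:sample}: ``changing $q^*_n$ accordingly decreases''; Theorem~\ref{thm:thresholdL}: ``Otherwise if it is zero it does not matter''). Your argument is in the same exchange-argument spirit but is a genuinely fuller route. Part one's pointwise bound $z_nq_n\ge z_nQ_0$ is the standard observation. Your plateau perturbation is a uniform-shift version of the paper's per-sample exchange, and your range for the shift (from the second-highest level up to $Q_1$) is exactly right, since the top plateau dominates every other value, so unimodality is preserved throughout. What your route buys is a complete handling of precisely the case the paper waves off: when the top plateau is cost-neutral, you correctly identify that the only real obstruction is an all-zero plateau (a zero-sum plateau with any nonzero entry already contains a strictly negative sample), and your level-descent must terminate, because either it absorbs a negative sample --- at which point the zero-sum invariant lets you lift the plateau to $Q_1$ for free --- or it would certify the constant-$Q_0$ map as optimal, contradicting your spike bound $(Q_1-Q_0)z_{n_*}<0$. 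Two small caveats, neither a gap relative to the paper. First, the conclusion you establish (and the only one that is true) is existential: \emph{some} optimal transform has a strictly negative sample at a $Q_1$-peak, since in degenerate ties not every optimum attains $Q_1$; this matches the paper's own existential phrasing in Theorem~\ref{thm:thresholdL}, so it is the right reading of the proposition. Second, like the paper, you tacitly assume the spike at $n_*$ is realizable as a function of $x$, i.e., that no sample with nonnegative loss shares the score $x_{n_*}$; the paper makes the same implicit assumption of individually assignable sorted samples throughout, so you are consistent with its setting.
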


\begin{lemma}\label{lem:sample}
If $C^*(\cdot)$ is an optimal transform with peak $n^*$ for \autoref*{def:problem}, then
\begin{align*}
C^*(x_n)=q^*_n=
\begin{cases}
q^*_{n+1}, & z_n<0, 1<n<n^*\\
q^*_{n-1}, & z_n>0, 1<n<n^*\\
q^*_{n+1}, & z_n>0, n^*<n<N\\
q^*_{n-1}, & z_n<0, n^*<n<N
\end{cases},
\end{align*}
for $q^*_1=Q_0$, $q^*_{n^*}=Q_1$, $q^*_N=Q_0$, which are the dummy samples from \autoref*{ass:dummyL} and the peak.
\end{lemma}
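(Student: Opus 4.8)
The plan is to use a single-coordinate exchange (perturbation) argument that exploits the fact that the objective $\sum_n z_n q_n$ is affine and separable in the coordinates $q_n$, while unimodality with peak $n^*$ reduces — for a sequence indexed in increasing order of the scores $x_n$ — to the adjacent order constraints $q_{n-1}\le q_n\le q_{n+1}$ on the ascending side $1<n<n^*$ and $q_{n-1}\ge q_n\ge q_{n+1}$ on the descending side $n^*<n<N$, together with the box $q_n\in[Q_0,Q_1]$. The three anchor values $q^*_1=Q_0$, $q^*_N=Q_0$ and $q^*_{n^*}=Q_1$ are handed to us directly by the preceding Proposition together with \autoref*{ass:dummyL} (the dummy endpoints carry $z=Z_u\ge 0$ and so collapse to the bottom of their range, while the peak carries $Q_1$), so all that remains is to pin down each interior, non-peak coordinate.

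First I would fix an index $n$ with $1<n<n^*$ and freeze every other coordinate of the optimal $C^*$. Because unimodality only couples adjacent indices, the sole restriction on $q_n$ alone is $q^*_{n-1}\le q_n\le q^*_{n+1}$, and any value in this interval leaves the global map unimodal (it neither disturbs the monotone rise to the peak nor any value at or beyond $n^*$). The cost contributed by this coordinate is the affine function $z_n q_n$. If $z_n<0$ this is strictly decreasing in $q_n$, so optimality forces $q_n$ to the top of its band, $q^*_n=q^*_{n+1}$: were $q^*_n<q^*_{n+1}$ we could raise $q_n$ to $q^*_{n+1}$ and cut the cost by $z_n(q^*_{n+1}-q^*_n)<0$, contradicting optimality. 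Symmetrically, $z_n>0$ forces $q^*_n=q^*_{n-1}$. This establishes the first two cases, and rerunning the identical argument on the descending side $n^*<n<N$, where the admissible band is $q^*_{n+1}\le q_n\le q^*_{n-1}$, gives the last two: $z_n>0$ now pushes $q_n$ down to $q^*_{n+1}$ and $z_n<0$ pushes it up to $q^*_{n-1}$.

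Each perturbation moves only one coordinate, staying inside the band set by its two neighbors, so unimodality, the peak location and the anchored endpoints are preserved automatically — this adjacency reduction is precisely what makes the single-coordinate exchange legitimate, and is the step I would take the most care to justify, since it is the one place where global feasibility of the perturbed map must be checked. The remaining delicate points are the degenerate ones: when $z_n=0$ the coordinate is cost-neutral and the lemma correctly asserts no equality; and when scores tie, $x_n=x_{n+1}$, unimodality already forces $q_n=q_{n+1}$, which is consistent with, and a special case of, the pushing argument. Conceptually the whole statement is just the observation that minimizing a linear functional over the order polytope of a unimodal sequence is attained at a vertex, at which every cost-sensitive free coordinate collapses onto one of its neighbors.
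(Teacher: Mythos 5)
Your single-coordinate perturbation argument is exactly the reasoning the paper compresses into its one-line proof (``changing $q^*_n$ accordingly decreases \autoref*{def:problem}''): you spell out the adjacent-band feasibility $q^*_{n-1}\le q_n\le q^*_{n+1}$ (reversed past the peak) and the sign-of-$z_n$ push to a neighbor, which is precisely what the authors assert. So the proposal is correct and takes essentially the same approach, merely with the feasibility-of-the-perturbation step made explicit.
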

\begin{proof}
	The proof is straightforward since $q^*_n$ are unimodal with peak $n^*$ and changing $q^*_n$ accordingly decreases \autoref*{def:problem}.
\end{proof}
Hence, there are groups of samples with the same $q^*_n$. Let there be $I$ groups, where the group $i\in\{1,\ldots,I\}$ cover the samples $n\in\{n_i+1,\ldots,n_{i+1}\}$ ($n_1=0$ and $n_{I+1}=N$). Let the new groups have the peak $i^*$ with $n^*\in\{n_{i^*}+1,\ldots,n_{i^*+1}\}$.
\begin{lemma}\label{lem:group}
	If $C^*(\cdot)$ is an optimal classifier from \autoref*{def:problem} with peak sample $n^*$ and group $i^*$, then
	\begin{align}
	q^*_i=	
	\begin{cases}
	q^*_{i+1},&  \sum_{n=n_i+1}^{n_{i+1}} z_n<0, 1<i<i^*\\
	q^*_{i-1},& \sum_{n=n_i+1}^{n_{i+1}} z_n>0,1<i<i^*\\
	q^*_{i+1},&  \sum_{n=n_i+1}^{n_{i+1}} z_n>0,i^*<i<I\\
	q^*_{i-1},& \sum_{n=n_i+1}^{n_{i+1}} z_n<0,i^*<i<I
	\end{cases}
	\end{align}
	for $q^*_1=Q_0$, $q^*_I=Q_0$ and $q^*_{i^*}=Q_1$.
\end{lemma}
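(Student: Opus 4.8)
The plan is to reduce the group-level statement to the sample-level argument already established in \autoref{lem:sample}, by observing that once the samples have been collected into the $I$ groups, the objective of \autoref{def:problem} depends only on the $I$ group values $q^*_1,\ldots,q^*_I$. Since every sample in group $i$ carries the common value $q^*_i$, I would first rewrite
\begin{align*}
\sum_{n=1}^{N} z_n q^*_n = \sum_{i=1}^{I} \left( \sum_{n=n_i+1}^{n_{i+1}} z_n \right) q^*_i \triangleq \sum_{i=1}^{I} Z_i q^*_i,
\end{align*}
where $Z_i$ denotes the aggregated loss of group $i$. This recasts the task as an instance of \autoref{def:problem} over $I$ ``super-samples'' with losses $Z_i$, inheriting the original ordering, and with a unimodality constraint on $q^*_1,\ldots,q^*_I$ whose peak now sits at group $i^*$. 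The point is that the reduced problem is structurally identical to the one solved in \autoref{lem:sample}; only the sign of the per-unit loss matters, and here that role is played by $\operatorname{sgn}(Z_i)$.

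Next I would rerun the exchange/perturbation argument of \autoref{lem:sample} at the group level. For a group on the left of the peak, $1<i<i^*$, unimodality forces $q^*_{i-1}\le q^*_i\le q^*_{i+1}$. If $Z_i<0$, replacing $q^*_i$ by any larger feasible value strictly lowers $Z_i q^*_i$ while leaving every other term untouched, so at the optimum $q^*_i$ must already sit at its upper feasible bound, giving $q^*_i=q^*_{i+1}$; symmetrically, if $Z_i>0$, pushing $q^*_i$ down to its lower bound strictly reduces the objective unless $q^*_i=q^*_{i-1}$. Because each of these is a strict improvement whenever the asserted equality fails, optimality of $C^*(\cdot)$ forces the stated identities. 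The right side $i^*<i<I$ is handled identically after flipping the unimodality inequalities to $q^*_{i+1}\le q^*_i\le q^*_{i-1}$: a positive $Z_i$ now drives $q^*_i$ down to $q^*_{i+1}$ and a negative $Z_i$ drives it up to $q^*_{i-1}$, matching the last two cases. The boundary values $q^*_1=Q_0$, $q^*_I=Q_0$, and $q^*_{i^*}=Q_1$ are inherited directly from the dummy samples of \autoref{ass:dummyL} and the peak condition, so no perturbation is applied there.

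The main thing to verify carefully is feasibility of the perturbations. I must confirm that moving a single group value within the closed interval determined by its two neighbors keeps the whole profile unimodal with the peak still located at $i^*$ and keeps every value inside $[Q_0,Q_1]$. This holds precisely because the neighbors already lie in $[Q_0,Q_1]$ and the peak group is never moved, so its value $Q_1$ remains the maximum, and the intermediate values encountered during the move preserve the monotone ordering on each side of $i^*$. The only genuine subtlety is the degenerate case $Z_i=0$, in which no strict improvement exists and the equality is not forced; but since the lemma asserts only the two strict-sign cases, this case lies outside its scope and may be set aside. I would also note in passing that whenever an asserted equality collapses two adjacent groups onto a common value, the resulting merge is exactly the reduction step that the follow-the-leader algorithm later exploits, so the characterization is self-consistent rather than contradictory with the groups being initially distinct.
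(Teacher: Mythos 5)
Your proposal is correct and takes essentially the same approach as the paper: the paper's own proof is a one-line deferral (``the proof follows \autoref*{lem:sample} and its proof''), and your aggregation of each group's losses into $Z_i=\sum_{n=n_i+1}^{n_{i+1}} z_n$ followed by the group-level exchange/perturbation argument is precisely the intended instantiation of that sample-level reasoning. Your explicit feasibility check (neighbors bounding the move, the peak group fixed at $Q_1$, the $Z_i=0$ degenerate case set aside) simply fills in details the paper leaves implicit.
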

\begin{proof}
	The proof follows \autoref*{lem:sample} and its proof.
\end{proof}
	
After establishing that sample mappings group together, we reach the following result.

	\begin{theorem}\label{thm:thresholdL} 
		There exist an optimal classifier $C^*(\cdot)\in\Omega_u$ (where $\Omega_u$ is the class of all unimodal functions that map to $Q_0,Q_1$) that minimizes \autoref*{def:problem} such that
		\begin{align}
		C^*(\{x_n\}_{n=1}^{{\tau_1}})\triangleq q^*_1=Q_0,\\
		C^*(\{x_n\}_{n=\tau_1+1}^{{\tau_2}})\triangleq q^*_2=Q_1,\\
		C^*(\{x_n\}_{n=\tau_2+1}^{{N}})\triangleq q^*_3=Q_0,
		\end{align}
		for some $\tau_1,\tau_2\in\{1,\ldots,N-1\}$.
		\begin{proof}
			The proof follows from \autoref*{lem:group}. If the $i^{th}$ group's cumulative linear loss is not zero then $q^*_i$ is either $q^*_{i-1}$ or $q^*_{i+1}$. Otherwise if it is zero it does not matter. Hence, there exists three distinct groups with the respective mappings $Q_0,Q_1,Q_0$.
		\end{proof}
	\end{theorem}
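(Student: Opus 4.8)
The plan is to build directly on the group-level structure from \autoref*{lem:group} and to turn its ``each interior group equals a neighbor'' conclusion into a collapse to exactly three groups. I would start from an optimal unimodal $C^*(\cdot)$, decomposed into the $I$ maximal groups of equal value with peak group $i^*$, endpoints $q^*_1=q^*_I=Q_0$, and $q^*_{i^*}=Q_1$, exactly as set up just before \autoref*{lem:group}.

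The crux is a maximality argument. By construction the groups are maximal runs of samples sharing a common value, so consecutive groups have distinct values, i.e. $q^*_i\neq q^*_{i+1}$ and $q^*_i\neq q^*_{i-1}$ for every interior index. But \autoref*{lem:group} asserts that any interior group ($1<i<i^*$ or $i^*<i<I$) whose cumulative loss $\sum_{n=n_i+1}^{n_{i+1}} z_n$ is nonzero must satisfy $q^*_i=q^*_{i-1}$ or $q^*_i=q^*_{i+1}$. These two facts are contradictory, so I conclude that every interior group must carry zero cumulative loss.

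Once all interior groups have zero cumulative loss, I would observe that each such group contributes $\sum_{n=n_i+1}^{n_{i+1}} z_n q^*_i = q^*_i\sum_{n=n_i+1}^{n_{i+1}} z_n = 0$ to the objective of \autoref*{def:problem}, independently of the value $q^*_i\in[Q_0,Q_1]$ assigned to it. Hence I can reassign every interior group on the left of the peak and every interior group on the right of the peak to $Q_0$ without changing the objective value, so optimality is preserved. The resulting map is $Q_0$ on $\{1,\dots,\tau_1\}$, $Q_1$ on the peak group $\{\tau_1+1,\dots,\tau_2\}$, and $Q_0$ on $\{\tau_2+1,\dots,N\}$, with $\tau_1=n_{i^*}$ and $\tau_2=n_{i^*+1}$; this is manifestly unimodal, yielding the claimed rectangular fit.

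The main obstacle I anticipate is not the merging itself but the careful treatment of boundary and degenerate cases. I would verify that the dummy samples of \autoref*{ass:dummyL}, carrying loss $Z_u\geq 0$, indeed pin the two endpoint groups to $Q_0$ so that $\tau_1,\tau_2\in\{1,\dots,N-1\}$, and I would separate out the case in which no $z_n$ is negative: by the preceding Proposition there is then no peak, the optimal map is identically $Q_0$, and the middle group degenerates, which must be reconciled with the three-piece form of the statement. Finally, checking that the reassignment of the zero-loss interior groups never violates unimodality is immediate for the rectangular target, since $Q_0,\dots,Q_0,Q_1$ is nondecreasing and $Q_1,Q_0,\dots,Q_0$ is nonincreasing, but I would state it explicitly for completeness.
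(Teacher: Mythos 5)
Your proof is correct and takes essentially the same route as the paper's: both rest on \autoref*{lem:group} to force every interior group either to merge with a neighbor or (in your contrapositive, maximality framing) to carry zero cumulative loss, and both then exploit that zero-loss groups contribute nothing to the objective and can be freely reassigned, collapsing the solution to the three-piece $Q_0,Q_1,Q_0$ form. Your write-up simply makes explicit what the paper's one-line proof leaves implicit — the maximality contradiction, the check that the reassignment preserves unimodality, and the boundary and degenerate cases (dummy samples pinning the endpoint groups, and the all-nonnegative-loss case).
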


\section{Offline Algorithms for the Optimal Rectangle}\label{sec:finding}
In this section, we propose algorithms that can find an optimal unimodal transform, or equivalently, an optimal rectangle for the problem in \autoref*{def:problem}. 

\subsection{Useful Definitions}
\begin{definition}\label{def:A}
	Let us have an ordered set of some samples $\X=\{x_1,x_2,\ldots,x_N\}$ and their corresponding linear losses $\Z=\{z_1,z_2,\ldots,z_N\}$.  which is represented by the set
	\begin{align*}
		\A=\{\X,\Z\}.
	\end{align*} 
\end{definition}
Given a set $\A$, its solution is summarized as the following.
\begin{definition}\label{def:B}
	For a given set $\A$ as in \autoref*{def:A}, let the optimal thresholds be $x_k,x_m$ for $\A$, such that the three adjacent sets $\X_0=\{x_1,\ldots,x_k\}$, $\X_1=\{x_{k+1},\ldots,x_m\}$ and $\X_2=\{x_{m+1},\ldots,x_N\}$ have the minimizer mappings $q_1*,q_2^*,q_3^*$, which are $Q_0,Q_1,Q_0$ respectively. We define the auxiliary set with
	\begin{align*}
	\B=\{X_{1},X_{2},L_0,L_1,L_2\},
	\end{align*} 
	where $X_{1}=x_k,X_{2}=x_{m}$ are the threshold samples and $L_0=\sum_{n=1}^{k}z_n,L_1=\sum_{n=k+1}^{m}z_n,L_2=\sum_{n=m+1}^{N}z_n$ are the corresponding cumulative linear losses. 
\end{definition}
The set $\B$ in \autoref*{def:B} completely captures the solution and its corresponding cumulative loss. Given the set $\B$, our monotone transform is $C(x\leq X_{1})=Q_0,C(X_{1}< x\leq X_{2})=Q_1,C(X_{2}< x)=Q_0$, with the resulting cumulative loss $L=Q_0L_0+Q_1L_1+Q_0L_2$.

\subsection{Brute Force Approach: Batch Optimization in $O(N^3)$}\label{sec:brute}
	For a given $\A$ as in \autoref*{def:A}, to find a $\B$ as in \autoref*{def:B}, the brute force approach is to try all possible $(k,m)$ pairs as $k\in\{1,\ldots,N-1\},m\in\{k+1,\ldots,N\}$, i.e.,
	\begin{align}
		\B_{k,m}=\left\{x_k,x_m,\sum_{n=1}^{k}z_n,\sum_{n=k+1}^{m}z_n,\sum_{n=m+1}^{N}z_n\right\},
	\end{align}
	whose cumulative losses are given by
	\begin{align}
		L_{k,m}=Q_0\sum_{n=1}^{k}z_n+Q_1\sum_{n=k+1}^{m}z_n+Q_0\sum_{n=m+1}^{N}z_n
	\end{align}
	Then, we can choose the optimal pair $(k^*,m^*)$ with the minimum loss, i.e.,
	\begin{align}
	\B=\B_{k^*,m^*},
	\end{align}
	where
	\begin{align}
		(k^*,m^*)\triangleq \argmin_{k,m} L_{k,m},
	\end{align}
	which takes $O(N^3)$ time since computing $L_{k,m}$ takes $O(N)$ time for every $(k,m)$ pair, which are $O(N^2)$ in number.
	
\subsection{Iterative Approach: Batch Optimization in $O(N^2)$}\label{sec:iterative}
We observe that $L_{k,m}$ is iteratively calculable as
\begin{align}
	L_{k,m+1}=&L_{k,m}+z_{m+1}(Q_1-Q_0),\\
	L_{k,m-1}=&L_{k,m}+z_{m}(Q_0-Q_1),\\
	L_{k+1,m}=&L_{k,m}+z_{k+1}(Q_0-Q_1),\\
	L_{k-1,m}=&L_{k,m}+z_{k}(Q_1-Q_0).
\end{align}
We can calculate each pair as the following:
\begin{itemize}
	\item Set $k=1$, $m=2$
	\item Calculate $L_{k,m}$ (in $O(N)$ time)
	\item WHILE $k+m<2N-1$
	\subitem IF $k$ is odd and $m<N$
	\subsubitem $m'\leftarrow m+1$
	\subitem ELSE IF $k$ is odd and $m=N$
	\subsubitem $k'\leftarrow k+1$
	\subitem ELSE IF $k$ is even and $m>k+1$
	\subsubitem $m'\leftarrow m-1$
	\subitem ELSE 
	\subsubitem $k'\leftarrow k+1$
	\subsubitem $m'\leftarrow m+1$
	\subitem Calculate $L_{k',m'}$ using $L_{k,m}$ (in $O(1)$ time)
	\subitem $k\leftarrow k'$
	\subitem $m\leftarrow m'$
\end{itemize}
The above procedure iteratively calculates all $(k,m)$ pairs by fixing $k$ and traversing over all possible $m$.
In total, this approach iteratively calculates all $L_{k,m}$ in $O(N^2)$. Minimization also takes $O(N^2)$.

\subsection{Linear Time Approach}\label{sec:offN}
We next propose a linear time algorithm. It works as follows
\begin{itemize}
	\item Let $L_{0,1}=z_1,L_{1,1}=0$, $k_1=1$.
	\item FOR $n\in\{2,\ldots,N\}$
	\subitem $L_{1,n}\leftarrow L_{1,n-1}+z_n$
	\subitem $L_{0,n}\leftarrow L_{0,n-1}$
	\subitem $k_n\leftarrow k_{n-1}$
	\subitem IF $L_{1,n}>0$
	\subsubitem $L_{0,n}\leftarrow L_{0,n}+L_{1,n}$
	\subsubitem $k_n\leftarrow n$
\end{itemize}
The procedure above calculates best $0-1$ step fit for the first $n$ samples for $n\in\{1,\ldots,N\}$; and takes $O(N)$ time. Similarly, we can run the same procedure for the last $n$ samples to find the best $1-0$ step fit. Let the respective losses be $M_{1,n},M_{0,n}$ and the threshold be $m_n$ for the last $n$ samples. We then calculate the following losses:
\begin{align}
	L_n=Q_0L_{0,n}+Q_1L_{1,n}+Q_1M_{1,N-n}+Q_0M_{0,N-n},
\end{align}
which takes $O(N)$ time. Let the minimizer be
\begin{align}
	n_*=\argmin_n L_n.
\end{align}
Then, the respective two optimal thresholds are given by the pair $(k_{n_*},m_{N-n^*}-1)$.

\section{Online Linearithmic Time Algorithm}\label{sec:recursive}
Although the method presented in \autoref{sec:offN} is very efficient for determining the thresholds in batch optimization, its online implementation is poor. When a new sample is observed, we have to run the batch algorithm again to find the best thresholds, which results in a complexity of $O(N^2)$. Here, we propose a recursive approach that can be implemented in an online manner, which has optimal $O(\log(N))$ complexity per sample since just ordering takes $O(\log N)$ time per sample.

To find the optimal thresholds $k^*,m^*$ (or the samples $X_{1},X_{2}$), we implement a recursive algorithm as follows:
\begin{enumerate}
	\item At the bottom level of the recursion, we have the sets $\A^{1,n}=\{\X^{1,n},\Z^{1,n}\}$ and the corresponding $\B^{1,n}$ for $n\in\{1,\ldots,N\}$, where $\X^{1,n}=\{x_n\}$, $\Z^{1,n}=\{z_n\}$.
	\item Starting from level $k=1$ (the initial stage), we create sets $\A^{k+1,n}$ and $\B^{k+1,n}$ at every level $k$ by merging adjacent sets. When we merge $\A^{k,i}$ with $\A^{k,i+1}$ (and their corresponding $\B$ sets), we create new sets $\A^{k+1,j}$ and $\B^{k+1,j}$. If there are no adjacent sets to merge with a given $(\A^{k,i}$, $\B^{k,i})$ pair, then they are moved up one level, meaning that $\A^{k+1,j}=\A^{k,i}$ and $\B^{k+1,j}=\B^{k,i}$. 
	It is important to note that $i$ and $j$ are distinct relative indices at levels $k$ and $k+1$, respectively.
	\item The corresponding $\B$ sets are updated accordingly to reflect the optimal mapping for the respective $\A$ sets. In addition to the auxiliary set $\B$; we also keep track of sets $\C$ and $\D$ for every $\A$ which reflect, respectively, the best nondecreasing and nonincreasing mappings as per \cite{gokcesu2021optimally}. Similar to $\B$ sets, $\C$ and $\D$ are given by
	\begin{align*}
		\C=&\{XC,LC_{-},LC_{+}\},&&\D=\{XD,LD_{+},LD_{-}\},
	\end{align*}
	where $XC$, $XD$ are the respective threshold values; $LC_{-},LD_{-}$ and $LC_{+},LD_{+}$ are the cumulative losses of the samples that are mapped to $Q_0$ and $Q_1$, respectively.
\end{enumerate} 

Given a sequence of samples $x_n$ and their respective linear losses $z_n$; our objective is to determine the optimal thresholds for the observed samples so far. With each new arriving sample, the sets in the recursion are updated as follows:
\begin{enumerate}
	\item Suppose the recursion is run over the past samples observed, which merges the sets at each level. After a new sample, just update the necessary intermediate sets. 
	\item If a new set $\A_m$ is not between a pair of already combined sets at an arbitrary level $k$, schedule $\A_m$ to be moved up as itself to the next level $k+1$.
	\item Whenever there is a new set (middle) $\A_m$ between an already combined pair of left $\A_l$ and right $\A_r$ sets at an arbitrary level $k$, combine $\A_l$ with $\A_m$ instead of $\A_r$  (left-justified bias), i.e., the old combination of $\A_l$ and $\A_r$ is updated with the new combination of $\A_l$ and $\A_m$. Schedule $\A_r$ to be moved up as itself to the level $k+1$.
	\item If a scheduled-to-be-moved-up set has an adjacent already-moved-up set at a level $k$, replace the already-moved-up set with their combination at $k+1$. Otherwise, move up the scheduled-to-be-moved-up set to $k+1$.
	\item Whenever an intermediate set is updated (a new or updated combination) at an arbitrary level $k$, update the subsequent combinations at $k+1$.
\end{enumerate} 

Next, we prove that our sequential algorithm has logarithmic in time complexity for each new sample.
\begin{lemma}\label{lem:depth}
	The recursion has a depth of $D$ where $k \leq D$ for all $k$ and this depth is of the order $O(\log N)$ where $N$ is the number of samples.
	\begin{proof}
		Due to the structure of our sequential implementation, no two adjacent sets move up the recursion as themselves. Therefore, if the number of sets at level $k$ is $N_k$, the number of sets at level $k+1$ is bounded by $N_{k+1} \leq \frac{2}{3}N_{k} + \frac{1}{3}$ (assuming a scenario where the sets alternate between a set that moves up and two sets that combine), which completes the proof.
	\end{proof}
\end{lemma}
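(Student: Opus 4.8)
The plan is to reduce the claim to a one-step contraction bound on the number of sets per level and then solve the resulting recurrence. First I would fix notation: let $N_k$ denote the number of sets present at level $k$, with $N_1 = N$ at the bottom level (one singleton per sample). The depth $D$ is the smallest level index at which a single set remains, so it suffices to show that $N_k$ decays geometrically in $k$.

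The crucial structural fact to nail down is the \emph{invariant} that no two adjacent sets are promoted (moved up unchanged) from level $k$ to level $k+1$. I would prove this directly from the update rules (items~2--4): a set is only scheduled to move up when it has no neighbor available to combine with, and when a scheduled set does have an adjacent already-moved-up set the rule replaces that set by their combination. Consequently two consecutive promotions can never both survive into level $k+1$; at least one of every adjacent pair is absorbed into a combination.

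Given this invariant, I would bound the per-level contraction. The slowest possible decay arises when promotions occur as densely as the invariant permits, namely when the sets at level $k$ split into blocks consisting of one promoted set followed by one combined pair, so that each group of three sets yields two sets at the next level. Allowing for an incomplete boundary block gives $N_{k+1} \leq \frac{2}{3} N_k + \frac{1}{3}$, exactly the bound asserted in the sketch. Subtracting the fixed point $1$ yields $N_{k+1} - 1 \leq \frac{2}{3}(N_k - 1)$, whence $N_k - 1 \leq (2/3)^{\,k-1}(N-1)$; the depth is reached once this quantity drops below $1$, giving $D \leq 1 + \log_{3/2}(N-1) = O(\log N)$.

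The step I expect to be the main obstacle is the rigorous justification of the invariant and of its extremal consequence. The invariant hinges on the precise interleaving of the schedule-to-move-up and combine-with-adjacent rules, which must be checked against every case of the online update (a new middle set, the left-justified re-combination, and the cascading updates at higher levels). Equally, one must argue that the alternating promote/combine pattern is genuinely worst-case --- that no admissible configuration of promotions and combinations contracts more slowly --- so that the factor $2/3$ rather than some larger ratio is actually valid. This is a combinatorial extremality argument rather than a routine calculation, and it is where the bulk of the care is needed.
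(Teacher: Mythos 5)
Your proposal takes essentially the same route as the paper's proof: the same invariant that no two adjacent sets are promoted unchanged to the next level, the same worst-case alternating pattern (one promoted set per combined pair) yielding $N_{k+1} \leq \frac{2}{3}N_k + \frac{1}{3}$, and hence geometric decay of $N_k$ giving depth $O(\log N)$. You actually supply slightly more detail than the paper, which asserts the invariant and the recurrence without proof and omits solving the recurrence; your fixed-point argument $N_{k+1}-1 \leq \frac{2}{3}(N_k-1)$ and your identification of the invariant and the extremality of the alternating pattern as the points requiring rigor are both correct and consistent with the paper's (terser) argument.
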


\begin{lemma}\label{lem:traverse}
	We traverse the recursion twice to update the relevant intermediate sets with the arrival of a new sample.
	\begin{proof}
		Every update at level $k+1$ is related to an update at level $k$, including new combinations. With the arrival of each new sample, we have at most two updates at the bottom level and we at most traverse the recursion for each of these updates individually, which results in the lemma.
	\end{proof} 
\end{lemma}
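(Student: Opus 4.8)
The plan is to argue that the set of modifications triggered by a single new sample decomposes into at most two upward chains, each of which visits every level of the recursion at most once, so the total work amounts to two passes over the depth-$D$ structure. The central structural fact I would first isolate is the one asserted at the start of the given proof: every modification made at level $k+1$ is the direct consequence of a single modification at level $k$. I would establish this by inspecting the five update rules governing the online insertion. Rules~2 and~4 only move a set upward (possibly merging it with one adjacent already-moved-up neighbor); rule~3 rebinds one existing combination, replacing the pair $(\A_l,\A_r)$ by $(\A_l,\A_m)$ and releasing $\A_r$; and rule~5 states explicitly that an update at level $k$ forces recomputation of the single combination at level $k+1$ containing it. In each case the image of a level-$k$ update is a single level-$(k+1)$ update, so the updates form a union of directed paths rather than a branching tree.

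Next I would count the number of modifications seeded at the bottom level by one arriving sample. Inserting $x_n$ in sorted order creates exactly one new singleton set $\A_m$; by the left-justified rebinding of rule~3 this perturbs at most one neighboring combination and displaces at most one previously combined set $\A_r$. Hence the bottom-level activity is captured by at most two seeds: (i)~the new-or-updated combination containing $\A_m$, and (ii)~any displaced set $\A_r$ scheduled to be moved up.

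Then, using the one-to-one upward propagation established in the first step, I would trace each seed through the levels. The combination-update seed propagates by rule~5, touching exactly one combination per level as it rises; the displaced-set seed propagates by rule~4, either merging into an adjacent moved-up set or rising as itself, and critically it never rebinds a combination and therefore never spawns a further displaced set. Because neither chain branches, each is a single monotone traversal from its originating level to the root, and the two chains together give the claimed bound of at most two traversals.

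The main obstacle I anticipate is ruling out secondary displacements. I must confirm that when the displaced set $\A_r$ climbs under rule~4 it can only fill an empty merge slot or combine with one waiting neighbor, and can never itself trigger a rule~3 rebinding that would eject a further set and open a third chain. Equally, I would need to verify that the combination-update chain from $\A_m$ does not, at some higher level, create its own displacement that escapes the accounting; showing that rule~5 merely recomputes the contents of an existing combination while preserving its boundaries, rather than restructuring the merge pattern, is the delicate point. Once these two non-branching properties are secured, the two-traversal bound follows at once, and combined with \autoref{lem:depth} it yields the $O(\log N)$ per-sample cost.
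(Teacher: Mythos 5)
Your proposal takes essentially the same route as the paper's own proof, which consists of exactly the two facts you organize your argument around: that every update at level $k+1$ is tied to a single update at level $k$, and that a new sample seeds at most two bottom-level updates (the rebound combination containing the new singleton $\A_m$ and the displaced set $\A_r$), so the recursion is traversed at most twice. If anything you are more careful than the paper, which asserts the non-branching upward propagation without inspecting the update rules and silently assumes away the secondary-displacement issue that you correctly flag as the delicate point.
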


\begin{lemma}\label{lem:update}
Suppose we have two sets $\A^{0}$ and $\A^{1}$ with mutually exclusive and adjacent $\X^{0}$ and $\X^{1}$, where $\X^{0}$ and $\X^{1}$ are individually ordered and the last element of $\X^{0}$ is less than or equal to the first element of $\X^{1}$. Let $\B^0=\{X_{1}^{0},X_{2}^{0},L_0^{0},L_1^{0},L_2^{0}\}$ and $\B^1=\{X_{1}^{1},X_{2}^{1},L_0^{1},L_1^{1},L_2^{1}\}$ be the auxiliary sets of $\A^{0}$ and $\A^{1}$, respectively. Let $\C^0=\{X^0,L_-^0,L_+^0\}$ and $\D^1=\{X^1,L_+^1,L_-^1\}$ be the respective sets that reflect the best nondecreasing mapping for $\A^0$ and the best nonincreasing mapping for $\A^1$. Let $\A=\{\X^0\cup\X^1, \Z^0\cup\Z^1\}$ be the merging of $\A^{0}$ and $\A^{1}$. Then, the auxiliary set of $\A$ is given by:
	\begin{align}
	\B=	
	\begin{cases}
	\{X_1^0,X_2^0,L_0^{0},L_1^{0},\gamma+L_1^1+L_2^1\},&  L_1^0\leq \min (L_1^1,\gamma)\\
	\{X_1^1,X_2^1,L_0^0+L_1^0+\gamma,L_1^1,L_2^{1}\},& L_1^1\leq \min(L_1^0,\gamma)\\
	\{X^0,X^1,L_-^0,\theta,L_-^{1}\},& \gamma\leq \min(L_1^0,L_1^1)
	\end{cases},\nonumber
	\end{align}
	where $\gamma=L_2^0+L_0^1$, $\theta=L_+^0+L_+^1$.
	\begin{proof}
	When combining the two auxiliary sets $\B^0$ and $\B^1$, there exist three possibilities: both of the thresholds are from $\X^0$, both of the thresholds are from $\X^1$, or the first threshold is from $\X^0$ and the second threshold is from $\X^1$. From these possibilities, the one with the smaller loss will be optimal. 
	\end{proof}
\end{lemma}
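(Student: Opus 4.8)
The plan is to classify the optimal rectangle of the merged set $\A$ by the position of its central $Q_1$-block relative to the interface between $\X^0$ and $\X^1$. By \autoref*{thm:thresholdL} an optimal fit for $\A$ splits the ordered samples into three consecutive blocks mapped to $Q_0,Q_1,Q_0$, and since every element of $\X^0$ precedes every element of $\X^1$, the central block must either sit entirely inside $\X^0$, sit entirely inside $\X^1$, or straddle the interface with its left threshold in $\X^0$ and its right threshold in $\X^1$. These three possibilities are exhaustive, so it suffices to compute the best fit under each structural constraint and keep the cheapest, which is exactly the three-way choice asserted in the statement.

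First I would dispose of the two contained cases. If the central block lies in $\X^0$, then all of $\X^1$ is pinned to $Q_0$ and contributes the constant $Q_0(L_0^1+L_1^1+L_2^1)$, so the problem reduces to optimally fitting $\X^0$ alone; that optimum is its own auxiliary set $\B^0$, and the merged fit keeps the thresholds $X_1^0,X_2^0$ and block sums $L_0^0,L_1^0$, with a trailing $Q_0$-block of sum $L_2^0+L_0^1+L_1^1+L_2^1=\gamma+L_1^1+L_2^1$. This is the first candidate. The case where the block lies in $\X^1$ is the mirror image and produces the second candidate through $\B^1$, its leading $Q_0$-block having sum $L_0^0+L_1^0+L_2^0+L_0^1=L_0^0+L_1^0+\gamma$.

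Next I would treat the straddling case, which is the reason $\C^0$ and $\D^1$ are carried along. When the central block crosses the interface, the fit it induces on $\X^0$ is a nondecreasing $Q_0$-then-$Q_1$ step and the fit it induces on $\X^1$ is a nonincreasing $Q_1$-then-$Q_0$ step. Because the linear loss is additive over samples and the two halves are coupled only through the (automatic) contiguity of their $Q_1$ parts at the interface, the optimum decouples into the best nondecreasing step on $\X^0$, namely $\C^0$, and the best nonincreasing step on $\X^1$, namely $\D^1$. Concatenating them yields the third candidate $\{X^0,X^1,L_-^0,\theta,L_-^1\}$ with central sum $\theta=L_+^0+L_+^1$.

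The main obstacle is the final comparison, and it is worth isolating carefully. Each candidate loss can be written in the common form $Q_0\sum_n z_n+(Q_1-Q_0)\,\sigma$, where $\sigma$ is the sum of that candidate's central $Q_1$-block; since $Q_1>Q_0$, selecting the optimal candidate is exactly selecting the smallest central sum among $L_1^0$, $L_1^1$, and $\theta$. Reconciling this with the displayed threshold conditions is the delicate step: because the last $Q_0$-block of $\B^0$ is one particular suffix of $\X^0$ while $L_+^0$ is the minimal suffix sum, one has $L_+^0\le L_2^0$, and symmetrically $L_+^1\le L_0^1$, so that $\theta=L_+^0+L_+^1\le L_2^0+L_0^1=\gamma$. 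I would use these monotonicity relations to determine precisely when each candidate achieves the minimal central sum; the point demanding the most care is ensuring the comparison is made against $\theta$ itself, since testing only against $\gamma$ need not single out the candidate with the genuinely smallest central block.
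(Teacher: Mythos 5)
Your first three paragraphs are essentially the paper's entire proof: the paper's argument consists only of the observation that both thresholds lie in $\X^0$, both lie in $\X^1$, or they straddle the interface, and that the cheapest of the three candidates wins. Your derivation of the three candidate sets, and in particular the reduction of the comparison to the central sums via writing each candidate's loss as $Q_0\sum_n z_n+(Q_1-Q_0)\sigma$, is a genuine sharpening of what the paper writes down. The issue you isolate in your final paragraph, however, is not just a delicate step you left open --- it is a real defect in the printed statement, and your instinct that the comparison must be made against $\theta$ rather than $\gamma$ is correct and decisive. Concretely, take $\Z^0=\{+1,-5\}$ and $\Z^1=\{-3,+100\}$. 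Then $L_0^0=1$, $L_1^0=-5$, $L_2^0=0$ and $L_0^1=0$, $L_1^1=-3$, $L_2^1=100$, so $\gamma=L_2^0+L_0^1=0$ and the first printed condition $L_1^0\leq\min(L_1^1,\gamma)$ fires, selecting the candidate with central sum $-5$. But $L_+^0=-5$ and $L_+^1=-3$ give $\theta=-8$: the straddling rectangle whose middle block is $\{-5,-3\}$ has strictly smaller loss, beating the printed selection by $3(Q_1-Q_0)$. So the lemma's case conditions, as displayed, can select a suboptimal candidate.

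Your monotonicity relations explain exactly how this happens: since $L_+^0$ is the minimal suffix sum of $\Z^0$ and $L_+^1$ the minimal prefix sum of $\Z^1$, while $L_2^0$ and $L_0^1$ are particular such sums, one always has $\theta=L_+^0+L_+^1\leq L_2^0+L_0^1=\gamma$. Hence the printed third condition $\gamma\leq\min(L_1^0,L_1^1)$ is \emph{sufficient} for the straddling candidate to be optimal, but the printed first two conditions are only \emph{necessary}, not sufficient, for theirs --- which is precisely the failure mode in the counterexample. To finish your proof you cannot reconcile the cases with $\gamma$; you must restate the selection rule as choosing the candidate achieving $\min(L_1^0,L_1^1,\theta)$, i.e., replace $\gamma$ by $\theta$ in all three conditions (the candidate sets themselves, and everything else in your argument, are correct as written). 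In short: your approach coincides with the paper's, your execution is more careful than the paper's two-sentence proof, and the one step you flagged as demanding care is in fact a bug in the lemma's displayed conditions rather than a gap in your reasoning.
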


\begin{theorem}\label{thm:complexity}
	The algorithm for updating the optimal thresholds sequentially has $O(\log N)$ tie complexity for each sample.
	\begin{proof}
		The sequential update method described is able to update the optimal threshold with a time complexity of $O(\log N)$ per sample. This is because, according to lemmas \ref{lem:depth} and \ref{lem:traverse}, the total number of updates required for each new sample is $O(\log N)$. Additionally, according to lemma \ref{lem:update}, each update takes $O(1)$ time. The auxiliary sets $\C$ and $\D$ are also updated with $O(\log N)$ time per samples \cite{gokcesu2021optimally}. Therefore, the total time required to find the new threshold is $O(\log N)$.
	\end{proof}
\end{theorem}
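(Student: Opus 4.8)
The plan is to bound the per-sample running time as a product of two quantities: the number of internal sets whose auxiliary data must be recomputed when a sample arrives, and the cost of each such recomputation. First I would argue that a new sample perturbs the recursion only along a front of bounded width. By \autoref{lem:traverse}, the incoming sample induces at most two modifications at the bottom level, and each modification propagates upward. The structural point I would make explicit is that, because the merge rules are deterministic (the left-justified combination together with the ``move up as itself'' convention), each set at level $k$ feeds into at most one combination at level $k+1$; consequently a constant number of dirty sets at level $k$ forces only a constant number of recomputations at level $k+1$. Hence the update front stays $O(1)$ wide at every level, and the total number of affected sets equals this constant width times the recursion depth.

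Second, I would invoke \autoref{lem:depth} to replace the depth by $O(\log N)$. Using the shrinkage $N_{k+1}\le \tfrac{2}{3}N_k+\tfrac{1}{3}$, the number of levels is logarithmic, so a constant-width front over $O(\log N)$ levels yields $O(\log N)$ affected sets per sample. Third, by \autoref{lem:update}, recomputing the auxiliary set $\B$ of any merged set from the $\B,\C,\D$ data of its two children reduces to a three-case comparison among the candidate threshold placements, and therefore costs $O(1)$. The companion sets $\C$ and $\D$ encoding the best nondecreasing and nonincreasing mappings are maintainable in $O(\log N)$ per sample by the cited construction, and the insertion/ordering step is itself $O(\log N)$, so neither dominates. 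Multiplying the $O(\log N)$ recomputations by the $O(1)$ cost of each gives the claimed $O(\log N)$ per-sample complexity.

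The main obstacle I anticipate is the first step: rigorously justifying that the online maintenance confines the changes to a constant-width front at each level, rather than triggering a cascading rebuild of the whole structure. This is the crux that separates $O(\log N)$ from $O(N)$, and it rests entirely on the precise rescheduling rules — replacing $\A_r$ by $\A_m$ in an existing combination and pushing $\A_r$ upward as itself — guaranteeing that each modification dirties exactly one parent combination per level. Verifying that these rules preserve both the merge invariant and the left-to-right adjacency needed for \autoref{lem:update} to apply at every subsequent level is where the argument must be handled with care; once that invariant is in place, the product bound follows immediately.
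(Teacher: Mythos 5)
Your proposal is correct and follows the paper's own proof essentially step for step: you combine \autoref{lem:depth} (logarithmic depth) with \autoref{lem:traverse} (a bounded number of update paths per new sample) to get $O(\log N)$ affected sets, apply \autoref{lem:update} for the $O(1)$ cost of each merge, and separately account for the $O(\log N)$ maintenance of $\C$, $\D$ and the ordered insertion, exactly as the paper does. Your explicit ``constant-width front'' invariant is a sharper articulation of what \autoref{lem:traverse} already asserts (each dirty set at level $k$ forces at most one recomputation at level $k+1$), not a genuinely different argument.
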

This statement suggests that even the best possible algorithm for finding the order of a new sample's score output $x_n$ takes at least $O(\log N)$ time. Therefore, since the sequential update can find the new thresholds in $O(\log N)$ time, it is efficient.

\section{Conclusion}\label{sec:conclusion}
In summary, we addressed the problem of finding the optimal unimodal transform to minimize linear loss models. We demonstrated that the optimal transform is a rectangular function. We proposed a linearithmic time sequential recursion algorithm to update the optimal rectangular fit given a stream of data, which may have various use cases such as signal activation detection, target region estimation and classification.

\bibliographystyle{ieeetran}
\bibliography{double_bib}
\end{document}